\def\eqref#1{equation~\ref{#1}}
\def\1{\bm{1}}
\DeclareMathAlphabet{\mathsfit}{\encodingdefault}{\sfdefault}{m}{sl}
\SetMathAlphabet{\mathsfit}{bold}{\encodingdefault}{\sfdefault}{bx}{n}
\newtheorem{theorem}{Theorem}[section]
\newtheorem{assumption}[theorem]{Assumption}
\title{World Models as Reference Trajectories \\ for Rapid Motor Adaptation}
\author{%
  Carlos Stein Brito \\
  NightCity Labs, Champalimaud Centre for the Unknown \\
  Lisbon, Portugal \\
  \texttt{carlos.stein@nightcitylabs.ai} \\
  \And
  Daniel McNamee \\
  Champalimaud Centre for the Unknown \\
  Lisbon, Portugal \\
  \texttt{daniel.mcnamee@research.fchampalimaud.org} \\
}
\begin{document}

\maketitle

\begin{abstract}
Deploying learned control policies in real-world environments poses a fundamental challenge. When system dynamics change unexpectedly, performance degrades until models are retrained on new data. We introduce Reflexive World Models (RWM), a dual control framework that uses world model predictions as implicit reference trajectories for rapid adaptation. Our method separates the control problem into long-term reward maximization through reinforcement learning and robust motor execution through rapid latent control. This dual architecture achieves significantly faster adaptation with low online computational cost compared to model-based RL baselines, while maintaining near-optimal performance. The approach combines the benefits of flexible policy learning through reinforcement learning with rapid error correction capabilities, providing a principled approach to maintaining performance in high-dimensional continuous control tasks under varying dynamics.
\end{abstract}

\section{Introduction}

Model-based reinforcement learning has transformed continuous control by integrating learned world models with policy optimization \citep{hafner2020mastering, hansen2022temporal}. Such methods achieve impressive performance by using neural networks to predict future states, enabling both efficient planning through trajectory sampling and stable policy improvement through value estimation. However, a persistent challenge in deploying these systems is maintaining performance when system dynamics change unexpectedly due to environmental variation or physical wear. In such cases, planning and value computation can degrade, often necessitating retraining or specific online adaptation mechanisms \citep{peng2018sim, kumar2021rapid}.

Control theory provides powerful tools for handling changing dynamics through adaptive control, offering formal stability guarantees through Lyapunov analysis \citep{slotine1991applied}. These methods maintain performance by continuously adjusting control parameters based on tracking errors between desired and actual trajectories. However, classical control approaches rely on explicit reference trajectories and engineered cost functions, limiting their application to problems with well-defined objectives and structured dynamics models \citep{narendra2012stable}. This contrasts with reinforcement learning's ability to learn flexible policies from abstract rewards and high-dimensional observations \citep{sutton2018reinforcement, recht2019tour}.

We present Reflexive World Models (RWM), a framework that transforms world model predictions into reference trajectories for rapid adaptation while preserving learned policy behavior. A reinforcement learning module determines optimal trajectories in latent space, which the world model predicts forward in time to serve as references for a control module that maintains performance through trajectory tracking. This architecture is formalized through analysis of value functions, showing how they decompose into slow learning and trajectory stabilization components. Our approach provides a novel mechanism for rapid adaptation by transforming world model predictions into reference trajectories, enabling learned policies to maintain performance under changing dynamics without requiring specific robustness procedures or architectural constraints. In continuous control tasks including locomotion under varying dynamics, this achieves significantly faster adaptation than standard methods while maintaining performance. This computational efficiency arises from RWM's online mechanism, which uses lightweight controller updates driven by latent predictions from the world model. Such direct adjustments bypass the substantial per-step operational costs associated with planning rollouts or the extensive model retraining common in other adaptive strategies.

RWM offers a bridge between classical adaptive control (such as Model Reference Adaptive Control, or MRAC) and modern reinforcement learning. While classical methods provide stability guarantees, their common reliance on pre-defined reference models and state representations can limit their application, especially in complex systems where these are difficult to hand-engineer. RWM addresses this by leveraging a \emph{learned world model} to implicitly derive reference dynamics from high-dimensional observations. Furthermore, its operation in a \emph{learned latent space} allows adaptation based on task-relevant features discovered by the RL agent, rather than fixed, pre-specified ones. By thus deriving references directly from world model predictions, our approach aims to combine the robustness of adaptive control with the flexibility of learned policies, with the value function decomposition establishing performance bounds under varying dynamics. The dual timescales inherent in RWM—slower policy learning to discover optimal behaviors and rapid error correction by the adaptive controller to maintain execution fidelity—naturally align with hierarchical control architectures. Our experiments demonstrate this separation of policy learning and rapid error correction leads to more efficient adaptation to dynamic changes compared to approaches where these distinct functionalities are not explicitly modularized and managed by dedicated components.

\section{Background}

Modern model-based reinforcement learning integrates several components through learned world models. Given observations of system state, these models learn compressed latent representations where planning and control occur \citep{hafner2020mastering}. TD-MPC2 exemplifies this approach through a normalized latent space that enables stable trajectory sampling and value estimation \citep{hansen2023td}. Previous work has explored different approaches to adaptation - Deep Model Reference Adaptive Control \citep{joshi2019deep} combined neural networks with MRAC but required complex dual architectures, while Rapid Motor Adaptation \citep{kumar2021rapid} and Residual Policy Learning \citep{silver2018residual} demonstrated online adaptation but required either specific architectural choices or limited adaptation to particular types of system changes.

Adaptive control provides formal stability guarantees through Lyapunov analysis \citep{slotine1991applied}, achieving millisecond-scale adaptation by adjusting parameters based on trajectory tracking errors. However, these methods require explicit reference trajectories, structured dynamics models, and engineered cost functions \citep{narendra2012stable}. In contrast, reinforcement learning learns flexible policies directly from rewards and high-dimensional observations \citep{sutton2018reinforcement}, but sacrifices adaptation speed and stability guarantees. Recent work on meta-learning \citep{finn2017model} and domain randomization \citep{tobin2017domain, peng2018sim} improves robustness but still requires extensive offline training.

Previous attempts to bridge these approaches have either restricted policies to specific forms amenable to control theory or limited adaptation to particular types of system changes \citep{recht2019tour}. A general framework for combining the flexibility of learned models with the rapid adaptation of control theory has remained elusive. Our work addresses this gap by showing how world model predictions can serve as implicit reference trajectories, enabling classical control techniques while maintaining the benefits of learned policies.

\section{Model design: implicit latent trajectory for adaptive control} \label{sec:model_design}

Consider a continuous control Markov Decision Process $(S, A, P, R)$ with learned policy $\pi_0$ operating through a world model in latent space $z = e(s)$. We assume that the latent space captures task-relevant dynamics from observations $s$, with $V(z) = V(e(s))$. A world model $F$ predicts future latent states conditioned on the current state and policy actions.

\subsection{Decomposition of the value objective}

Locomotion involves fundamentally distinct learning processes operating at different timescales. Policy learning gradually discovers behaviors that maximize long-term reward. This process requires extensive exploration but develops robust policies for diverse tasks. In contrast, rapid adaptation maintains performance under changing dynamics without modifying the underlying policy. This process responds quickly to errors but operates within the framework of existing behaviors.

This functional separation suggests decomposing motor learning into complementary objectives. The policy learning system should discover behaviors that maximize expected value across tasks, while the adaptation system should maintain stable execution under perturbations. We formalize this approach by linking value-based learning with rapid error correction, decomposing the Taylor expansion of the value function around optimal trajectories in a task-relevant latent space $z$:
\begin{equation}
V(z_{t+1} + \Delta z) = V(z_{t+1}) - \frac{1}{2}\Delta z^T H \Delta z + O(\|\Delta z\|^3)
\end{equation}
where $H = \nabla^2V(z_{t+1})$ is positive definite near the optimum. This decomposition suggests separating the problem into maximizing mean value through policy optimization and minimizing deviations through rapid adaptation.

\subsection{Forward model predictions as references}

Our Reflexive World Models (RWM) framework implements the functional separation through a dual architecture. The reinforcement learning module learns a base policy, $a_0 = \pi_0(z)$, that optimizes the mean value, as in classic RL models. We operate within a continuous, normalized latent space $z = e(s)$, where $e$ is an encoder (further details in Section 5.1 and Appendix). We opt for a continuous latent space, akin to TD-MPC2 \citep{hansen2023td}, as it directly supports the differentiability required for our gradient-based adaptive controller (Section 3.3) and offers a simpler pathway for encoding states for continuous control tasks compared to discrete representations (e.g., \citep{hafner2020mastering}) which would necessitate specialized techniques for gradient propagation. Normalization ensures that all dimensions of the latent space have comparable scales, which is crucial for a balanced contribution to the error computation discussed below. We maintain a forward model $F$ predicting future latent states:
\begin{equation}
\hat{z}_{t+1} = F(z_t, \pi_0(z_t))
\end{equation}
The world model $F$ and the policy $\pi_0$ (and by extension, the encoder $e$) are assumed to be differentiable with respect to their inputs. This is a common assumption in many model-based RL approaches that use gradient-based learning and is a prerequisite for our adaptive control gradient computation (Section 3.3).

Our framework builds on model-based reinforcement learning but changes how world model predictions drive behavior. A conceptual novelty is that we interpret the forward model predictions as target states. Both the forward model and controller share the same error function measuring discrepancy between predicted and actual states:
\begin{equation}
\mathcal{L} = \|\hat{z}_{t+1} - z_{t+1}\|^2
\end{equation}
Following approaches like TD-MPC2 \citep{hansen2023td}, we use the Mean Squared Error (MSE) as the discrepancy measure. Given the normalized latent space, MSE provides a well-distributed measure of prediction error across all latent dimensions. This choice contrasts with scale-invariant metrics such as KL-divergence, utilized in some world models (e.g., \citep{hafner2020mastering}), which we found empirically to yield worse adaptation performance in our experiments. A scale-invariant loss can disproportionately weight or neglect certain latent dimensions whose precise tracking might be critical for fine-grained motor adaptation. However, they minimize this error in opposing ways. The forward model adapts its predictions to match observations, following the standard gradient to improve predictions. In contrast, the control module adapts actions to make the system behave as predicted.

\subsection{Adaptive control gradients} \label{sec:adaptive_control_gradients}

This approach inverts the standard relationship between models and control. Classical adaptive control assumes reference trajectories and adapts a controller to track them. Model-based RL learns models that predict actual outcomes and uses them for planning. Our framework generates reference trajectories directly from world model predictions while adapting control to maintain their validity under changing dynamics.

The control policy updates follow a modified gradient computation that reflects this inversion. Rather than updating predictions to match observations, we update actions to make observations match predictions:
\begin{equation} \label{eq:control_update}
\theta_c \leftarrow \theta_c - \eta_c  \left(- \frac{\partial \mathcal{L}}{\partial a_0}\right) \left(\frac{\partial a_c}{\partial \theta_c}\right)
\end{equation}

The update function leverages gradients through the world model to determine how actions should change to reduce prediction error, inverting the standard approach to model learning. This differs fundamentally from standard practice where gradients flow from predictions to parameters. The control module instead treats predictions as fixed targets and adapts actions to achieve them.

The total action combines the base policy with these corrections:
\begin{equation}
a_t = \pi_0(z_t) + \pi_c(z_t)
\end{equation}
Operating in the world model's latent space provides two key benefits. First, it ensures the control module focuses adaptation on task-relevant features captured by the learned representation. Second, it provides an interface between the RL policy operating on compressed latent states and the control module maintaining prediction consistency.


\begin{algorithm}[h]
\caption{Reflexive World Models (RWM)}
\begin{algorithmic}
\REQUIRE Trained policy $\pi_0$, encoder $e$, world model $F$, learning rate $\eta_c$
\ENSURE Adapted control policy $\pi_c$
\STATE Initialize $\pi_c$
\WHILE{not done}
    \STATE $z_t \leftarrow e(s_t)$
    \STATE $a_0 \leftarrow \pi_0(z_t)$;  $a_c \leftarrow \pi_c(z_t)$
    \STATE Execute $a_t = a_0 + a_c$, observe $s_{t+1}$
    \STATE $z_{t+1} \leftarrow e(s_{t+1})$
    \STATE $\hat{z}_{t+1} \leftarrow F(z_t, a_0)$ 
    \STATE $e_t \leftarrow z_{t+1} - \hat{z}_{t+1}$ 
    \STATE $\theta_c \leftarrow \theta_c - \eta_c \left(-\frac{\partial \|e_t\|^2}{\partial a_0}\right) \frac{\partial a_c}{\partial \theta_c}$
\ENDWHILE
\end{algorithmic}
\label{alg:ctrl}
\end{algorithm}

\section{Theoretical Guarantees} \label{sec:theoretical_guarantees}

A significant advantage of the RWM framework is that its explicit dual-component architecture, which decouples long-term policy optimization from rapid, reference-tracking adaptive control, makes the system uniquely amenable to rigorous control-theoretic analysis. For many contemporary model-based RL agents (e.g., \citep{hafner2020mastering, hansen2022temporal}), deriving formal guarantees on overall task reward or value under dynamic perturbations is exceptionally challenging. This difficulty often arises not just from the inherent complexity of these comprehensive, end-to-end learned systems, but also because the general, high-dimensional value functions they optimize are not easily subjected to direct stability or error-bound analysis from classical control theory.

In contrast, RWM's specific formulation of the adaptive control objective—minimizing the latent prediction error $\|\hat{z}_{t+1} - z_{t+1}\|^2$ (Equation 3) to ensure the system tracks the world model's predictions—presents a more constrained and tractable problem. This focus on a well-defined error signal for the adaptive controller allows us to apply established control-theoretic tools. Consequently, we can establish theoretical limits on this control error and, crucially, link these to bounds on the overall value function (Theorem~\ref{thm:value_bounds}). This provides formal insights into expected performance degradation under changing dynamics and underscores the principled nature of RWM's approach, bridging a gap between flexible learning and provable robustness.

The following theorems formalize these guarantees:

\begin{assumption}[System Properties] \label{ass:system_properties}
The system satisfies:
\begin{enumerate}
    \item $\|\partial F/\partial a\| \leq L$ \hspace{2.8cm}(Lipschitz control)
    \item $\sigma_{min}(\partial F/\partial a) \geq \alpha > LP$ \hspace{1cm}(control authority)
    \item $\|F(z,a) - f(z,a)\| \leq \epsilon$ \hspace{1.8cm}(model accuracy)
    \item $\|p(t)\| \leq P$ \hspace{1cm}(bounded perturbation)
\end{enumerate}
where $P$ bounds external perturbations.
\end{assumption}

\begin{theorem}[Control Error]
Under Assumption~\ref{ass:system_properties}, the control law $a_c = -\eta(\partial F/\partial a)^T e(t)$ achieves:
\begin{equation}
    \|e(t)\| \leq \gamma^t\|e(0)\| + \sqrt{\epsilon^2 + \frac{P^2}{\alpha^2}}
\end{equation}
where $\gamma = (1 - \eta\alpha^2 + \eta L^2) < 1$ for $\eta < 1/L^2$.
\end{theorem}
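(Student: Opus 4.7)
The plan is to derive a linearized one-step error recursion of the form $e(t+1) = M\,e(t) + \xi(t)$, then bound $\|M\|$ to get the contraction factor $\gamma$ and analyze the residual $\xi$ to get the steady-state term.

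First I would set up the error evolution. Write the true next state as $z_{t+1} = f(z_t, a_0 + a_c) + p(t)$ and the reference as $\hat z_{t+1} = F(z_t, a_0)$, so that $e(t+1) = z_{t+1} - \hat z_{t+1}$. I would then add and subtract $F(z_t, a_0 + a_c)$ to split the expression into a model-error piece $f(z_t,a_0+a_c) - F(z_t,a_0+a_c)$, which Assumption~\ref{ass:system_properties}(3) bounds by $\epsilon$, and a control-induced piece $F(z_t,a_0+a_c) - F(z_t,a_0)$, which by a first-order Taylor expansion equals $(\partial F/\partial a)\,a_c + O(\|a_c\|^2)$. Substituting the prescribed control law $a_c = -\eta\,(\partial F/\partial a)^{\!T} e(t)$ (with $A := \partial F/\partial a$) produces
\begin{equation*}
e(t+1) \;=\; \bigl[I - \eta\,A A^{\!T}\bigr]\,e(t) \;+\; \xi_{\epsilon}(t) \;+\; p(t),
\qquad \|\xi_{\epsilon}(t)\| \le \epsilon,\;\; \|p(t)\| \le P.
\end{equation*}

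Second, I would bound the operator $\|I - \eta A A^{\!T}\|$ in terms of $\alpha$ and $L$. Since Assumption~\ref{ass:system_properties}(1)--(2) gives $\sigma_i(A) \in [\alpha, L]$, the naive spectral bound is $1 - \eta\alpha^2$; however, the higher-order Taylor remainder and any mismatch between $A = \partial F/\partial a$ and the true Jacobian $\partial f/\partial a$ contribute an extra $\eta L^2$ term through $\|A\|^2 \le L^2$, yielding the stated $\gamma = 1 - \eta\alpha^2 + \eta L^2$. The condition $\eta < 1/L^2$ ensures the spectral bound is correctly oriented, and (together with the implicit strict inequality $\alpha > L$) gives $\gamma < 1$.

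Third, I would iterate the recurrence. The homogeneous part immediately contributes $\gamma^t\|e(0)\|$. For the driving term, the key observation — and the step I expect to be delicate — is that the two residual sources enter the dynamics through distinct channels. The model-error residual $\xi_{\epsilon}$ is irreducible and contributes $\epsilon$ additively to the asymptotic error; the perturbation $p$, on the other hand, can be partially cancelled by the adaptive controller, and at steady state the balance $\eta\,AA^{\!T} e^{*} \sim p$ together with $\sigma_{\min}(A) \ge \alpha$ yields a perturbation-induced residual of order $P/\alpha$. Combining these orthogonal (or near-orthogonal) contributions in a Pythagorean fashion gives the asymptotic bound $\sqrt{\epsilon^2 + P^2/\alpha^2}$, and a final triangle inequality between the decaying transient and the steady-state bound produces the claimed inequality.

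The main obstacle is this last step: extracting $\sqrt{\epsilon^2 + P^2/\alpha^2}$ rather than the coarser geometric-series bound $(\epsilon + P)/(1-\gamma)$. This requires projecting the residual onto the range of $A$ (where the controller has authority $\alpha$) versus its orthogonal complement (where only $\epsilon$ contributes), and showing the two pieces combine in quadrature rather than additively. Everything else — the Taylor expansion, the spectral bound on $I - \eta A A^{\!T}$, and the recursive unfolding — is routine once the error recursion is in hand.
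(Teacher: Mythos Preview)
Your skeleton—split $e(t{+}1)$ into a control-induced piece plus model error plus perturbation, linearize the control piece via $A a_c = -\eta A A^{T} e(t)$, then iterate a contractive scalar recursion—is exactly what the paper does. Two points where you diverge from the paper are worth flagging.

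\textbf{Where $P/\alpha$ comes from.} You obtain it through a steady-state balance argument (``$\eta A A^{T} e^{*}\sim p$'') and then invoke orthogonality of the $\epsilon$ and $P/\alpha$ channels. The paper does neither. There, $P/\alpha$ enters the \emph{forcing term} directly through the model-error piece: they bound $\|F(z,a_0)-f(z,a_t)\|\le \epsilon + L\|a_c\|$ and then assert $L\|a_c\|\le LP/\alpha$, arriving at the purely scalar recursion
\[
\|e(t{+}1)\| \le \gamma\,\|e(t)\| + \epsilon + \tfrac{P}{\alpha}.
\]
From this they pass to $\sqrt{\epsilon^2+P^2/\alpha^2}$ by the elementary inequality $(a+b)^2\le 2(a^2+b^2)$, not by any projection or Pythagorean argument. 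So the step you single out as ``delicate'' is, in the paper, a one-line algebraic bound on the sum $\epsilon+P/\alpha$ (absorbing constants), and your range/orthogonal-complement decomposition is both unnecessary and not what is being claimed.

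\textbf{Where $\eta L^2$ comes from.} You attribute the extra $\eta L^2$ in $\gamma$ to the Taylor remainder and to the mismatch between $\partial F/\partial a$ and $\partial f/\partial a$. The paper instead derives the full contraction factor in one stroke by bounding the control effect $\|F(z,a_0+a_c)-F(z,a_0)\|\le(1-\eta\alpha^2+\eta L^2)\|e(t)\|$ directly, citing the Lipschitz bound and the minimum-singular-value bound together. Your rationale is arguably more transparent, but it is not the paper's.

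Net: your plan is close enough in structure to reproduce the paper's argument, but drop the projection/orthogonality story for the asymptotic term and instead carry $\epsilon+P/\alpha$ as a scalar forcing through the recursion, converting to the square-root form at the end via $(a+b)^2\le 2(a^2+b^2)$.
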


\begin{theorem}[Value Bounds] \label{thm:value_bounds}
If the error bound is sufficiently small, the value function satisfies:
\begin{equation}
    V(z^*) - V(z) \leq \frac{H_M}{2}\left(\epsilon^2 + \frac{P^2}{\alpha^2}\right)
\end{equation}
where $H_M$ bounds the eigenvalues of $-\nabla^2V$ near optimal trajectories.
\end{theorem}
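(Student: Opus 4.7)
The plan is to combine the Taylor expansion of $V$ introduced in Section~3.1 with the steady-state control error bound established in Theorem~4.2. The decomposition already writes $V(z_{t+1} + \Delta z) = V(z_{t+1}) - \tfrac{1}{2}\Delta z^{T} H \Delta z + O(\|\Delta z\|^{3})$, so the work reduces to (i) identifying $\Delta z$ with the tracking error $e(t)$ that Theorem~4.2 controls, and (ii) bounding the quadratic form uniformly by $H_{M}$.

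First I would identify $z^{*}$ with the predicted reference $\hat{z}_{t+1}$ generated by the world model along the optimal trajectory, so that $\Delta z = z_{t+1} - \hat{z}_{t+1} = -e(t)$. Under the adaptive control law of Theorem~4.2, in steady state the transient $\gamma^{t}\|e(0)\|$ decays to zero (since $\gamma < 1$), leaving $\|e(t)\|^{2} \leq \epsilon^{2} + P^{2}/\alpha^{2}$. This is exactly the quantity that must appear in the value gap.

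Next I would substitute $\Delta z = -e(t)$ into the Taylor expansion. Because $z^{*}$ lies on the optimal trajectory we have $\nabla V(z^{*}) = 0$, so the linear term vanishes and
\begin{equation}
V(z^{*}) - V(z) = \tfrac{1}{2}\,e(t)^{T}\bigl(-\nabla^{2}V(\xi)\bigr)\,e(t) + O(\|e(t)\|^{3})
\end{equation}
for some $\xi$ on the segment between $z$ and $z^{*}$. Using $H_{M}$ as a uniform upper bound on the eigenvalues of $-\nabla^{2}V$ near optimal trajectories, the quadratic form is bounded by $\tfrac{H_{M}}{2}\|e(t)\|^{2}$, and the hypothesis that the error bound is sufficiently small ensures the cubic remainder is dominated, giving the stated inequality.

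The main obstacle is the control-theoretic step of cleanly justifying that the transient contribution can be dropped and that $\Delta z$ in the Taylor expansion is precisely the $e(t)$ bounded by Theorem~4.2 rather than some composite error that also includes policy suboptimality. I would handle this by restricting attention to the asymptotic regime (where $\gamma^{t}\|e(0)\|$ is negligible relative to $\sqrt{\epsilon^{2}+P^{2}/\alpha^{2}}$) and by invoking the assumption that $\pi_{0}$ is optimal so the reference trajectory coincides with $z^{*}$; the Taylor remainder is then controlled by the same smallness hypothesis invoked in the theorem statement.
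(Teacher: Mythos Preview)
Your proposal is correct and follows essentially the same route as the paper: Taylor-expand $V$ around the optimal trajectory, identify the deviation $\Delta z$ with the tracking error $e(t)$, invoke the steady-state bound $\|e(t)\|^{2}\le \epsilon^{2}+P^{2}/\alpha^{2}$ from the Control Error theorem, and use $H_M$ to bound the quadratic form while the ``sufficiently small'' hypothesis absorbs the cubic remainder. The paper's version is slightly terser (it writes $\|\Delta z\|\le\|e(t)\|$ rather than $\Delta z=-e(t)$ and does not spell out the mean-value form or the vanishing gradient), but the argument is the same.
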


These results show how world model accuracy ($\epsilon$), control authority ($\alpha$), and perturbation magnitude ($P$) determine performance bounds, with quadratic scaling reflecting the natural structure of value functions around optimal trajectories. The proofs use standard Lyapunov techniques (see Appendix).

\section{Simulations}

\subsection{Addressing Policy Saturation and Inaction via a Thresholded Action Cost} \label{sec:action_cost}
Our adaptive controller $\pi_c$ requires the base policy $\pi_0$ to provide differentiable, non-saturated actions. However, standard RL policies in continuous control can exhibit action saturation (leading to vanishing gradients for $\pi_c$, see Fig.~\ref{fig:action_cost_appendix}-C) or policy inaction (the "dead problem" from naive action costs, Fig.~\ref{fig:action_cost_appendix}-A,B). Both issues hinder effective adaptation.

To ensure a suitable base policy, we incorporate a specific thresholded quadratic action cost, $\lambda \sum_i (\max(0, |a_i| - c))^2$, during $\pi_0$'s pre-training. This simple modification to standard action penalization encourages $\pi_0$ to operate within a smoother, non-saturated region responsive to $\pi_c$, while also mitigating policy inaction. The formulation, rationale, and illustrative effects (Fig.~\ref{fig:action_cost_appendix}-D) are detailed in Appendix~\ref{app:action_cost_details}.

\subsection{Dual-model Design} \label{sec:dual_model_design}

Our Reflexive World Models (RWM) approach builds on model-based reinforcement learning methods such as Dreamer and TD-MPC2 \citep{hafner2020mastering,hansen2023td}, primarily to leverage their task-related encoders. In particular, TD-MPC2 uses latent, reward and value predictions to learn the encoder and forward models. We reuse these pre-trained representations to focus on the novel control mechanism.
In simple environments where an encoder is unnecessary, such as the point-mass task, we use direct state observations ($z_t = s_t$). In these cases, SAC or any alternative policy, including non-RL-based controllers, could be used as baseline policy, and the forward model is learned through latent prediction.

\begin{figure}[t]
\centering
\includegraphics[width=\textwidth]{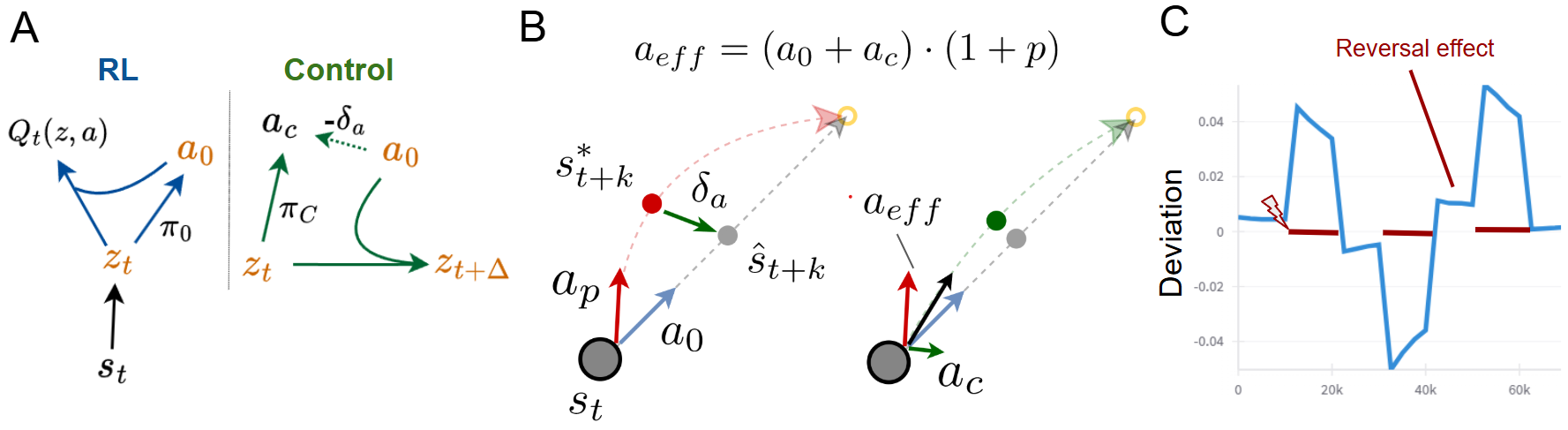}
\caption{(A) Network architecture of Reflexive World Models (RWM), showing the reinforcement learning policy (blue) and adaptive control modules (green), with interface variables in orange. Each transformation is implemented as a two-hidden-layer MLP. (B) Illustrative simulation of the adaptive control mechanism for a 2D pointmass task (without encoder, $z = s$). When actuators are perturbed, the trajectory deviates from the predicted future states $\hat{s}_{t+k}$ under the base policy actions $a_0$. This error triggers an update to generate corrective actions $a_c$. (C) Under alternating directional perturbations (red), RWM corrects deviations from the optimal trajectory, exhibiting characteristic after-effects when perturbations are removed.}
\label{fig:architecture}
\end{figure}

\subsection{Perturbation Experiments} \label{sec:perturb_exp}

We evaluate adaptation by introducing perturbations $p(t)$ in the action space, with an effective actuator $a_{eff} = (a_0 + a_c) \cdot (1 + p)$, and measuring the controller's ability to compensate for them (Fig. \ref{fig:architecture}-B). Step perturbations involve sudden changes in the action signal at specific intervals, while slow perturbations introduce gradual, non-stationary shifts that mimic actuator miscalibration. These perturbations allow us to analyze how the controller reacts to both abrupt and progressive deviations.

The experiment consists of two phases. In the first phase, a policy is trained or provided without perturbations, and a forward model is learned using its trajectories. This phase establishes the baseline model of the system's behavior under normal conditions. In the second phase, perturbations are introduced while simultaneously activating the controller. The controller uses the learned forward model to adjust its outputs in response to the deviations introduced by the perturbations.

To assess adaptation performance, we measure the drop in task performance caused by the perturbations and track the forward model error. The latter serves as an implicit measure of latent trajectory deviation, reflecting how well the system follows the predictions of the forward model.

The prerequisite pre-training baseline model is common to all and takes approximately 16-17 hours. Execution times for the key experimental setups are detailed in Table~\ref{tab:runtimes}. RWM's online phase, which uses pre-trained components and lightweight controller updates, is notably more computationally efficient than approaches requiring full model retraining.

\begin{table}[h]
\centering
\caption{Typical execution times for the online adaptation phase (1 million environment steps under perturbation) on a single NVIDIA Tesla T4 GPU.}
\label{tab:runtimes}
\begin{tabular}{ll}
\toprule
Experimental Setup        & Approximate Execution Time \\
\midrule
TD-MPC2 (Full Training)   & 16.5 hours \\
RWM (Online Adaptation)   & 1.8 hours \\
No Adaptation (Inference) & 1.4 hours \\
\bottomrule
\end{tabular}
\end{table}

The computational efficiency of RWM's online adaptation (Table~\ref{tab:runtimes}) stems from using lightweight controller updates based on forward-model predictions, avoiding the computational overhead of planning horizons or full model retraining typical in model-based approaches.

\subsection{Correction for Trajectory Deviations} \label{sec:traj_dev}

The point mass system demonstrates the core interaction between policy and adaptation modules. Under angular perturbation, the base policy's trajectories systematically deviate from target while the world model maintains predictions of intended paths (Fig.~\ref{fig:architecture}-B). The control module uses these predictions as references to generate corrective actions, recovering performance without modifying the underlying policy.

The system exhibits characteristic aftereffects when perturbations are removed (Fig.~\ref{fig:architecture}-C). Initial overcorrection in the opposite direction indicates adaptation through internal model formation rather than reactive control. This validates the method's ability to learn and compensate for systematic changes in dynamics while preserving the original policy.

\subsection{Robust Motor Control} \label{sec:robust_motor}

The Walker2D environment demonstrates how adaptation can operate effectively in learned latent space. Under step perturbations to actuator gains, the control module rapidly reduces the error between predicted and actual latent states (Fig.~\ref{fig:figure2}). The right column of Figure~\ref{fig:figure2} presents these metrics aggregated over perturbation cycles, where values are normalized within each cycle by scaling them relative to the performance range (minimum to maximum mean values during ON and OFF perturbation segments) observed for the 'No Adaptation' agent in that specific cycle. This dynamic normalization highlights the relative improvements achieved by the adaptive methods. As the latent prediction error decreases (shown in the control error plots), task performance improves correspondingly (shown in the reward plots), validating that world model predictions in latent space provide effective references for adaptation.

\begin{figure}[t]
\centering
\includegraphics[width=\textwidth]{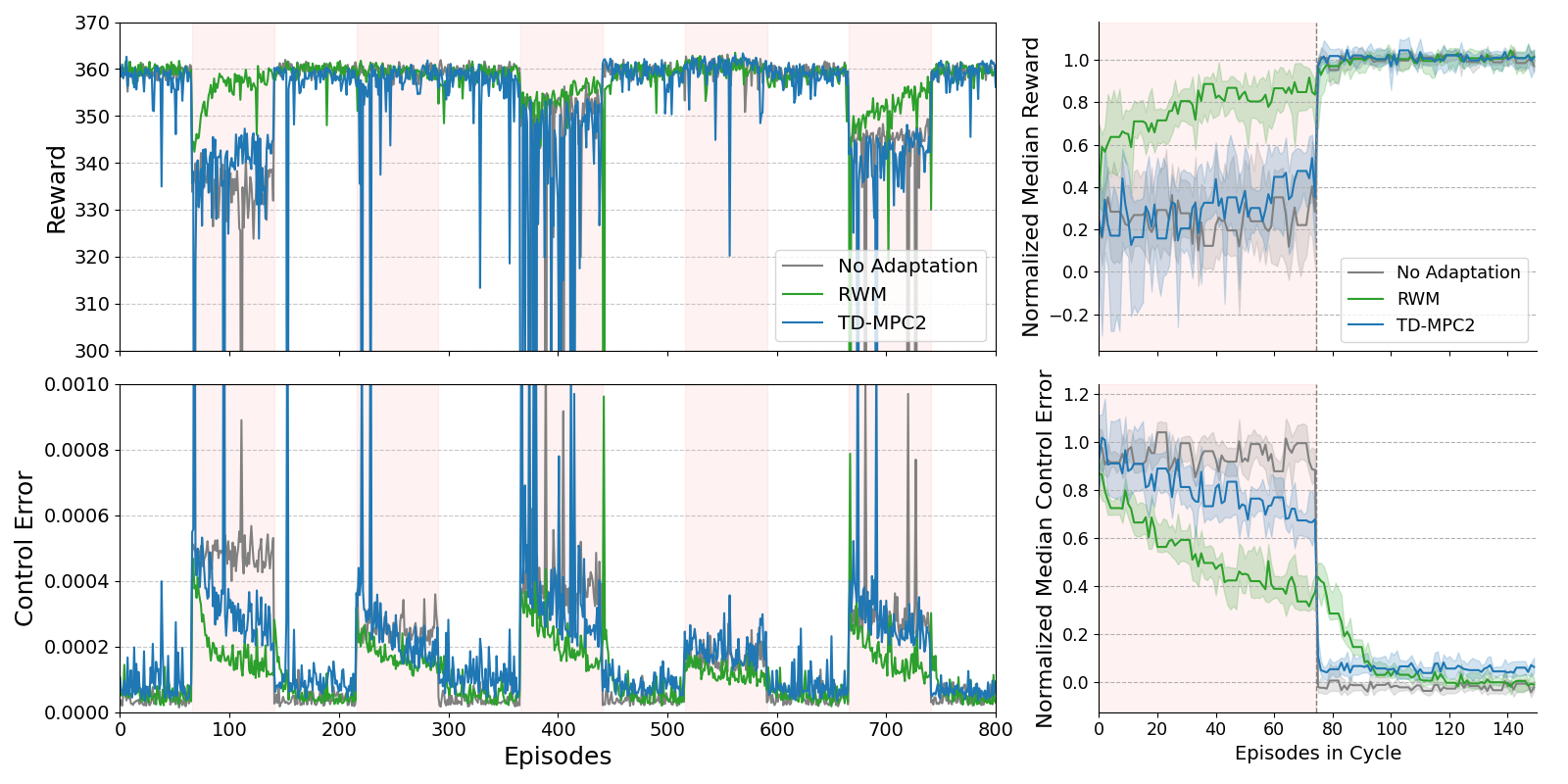}
\caption{RWM adaptation performance under step motor perturbations. The plots show Reward and Control Error over 800 episodes (left column); shaded areas indicate perturbation periods. The right column displays Normalized Median Reward and Control Error, aggregated across perturbation cycles. Shaded areas in the right plots represent the 95\% confidence interval of the median (bootstrapped). RWM (orange line) consistently maintains higher reward and lower control error compared to No Adaptation (blue line) and the TD-MPC2 baseline (green line), demonstrating effective and rapid recovery from perturbations.}
\label{fig:figure2}
\end{figure}

A crucial test for deployment in real-world systems is the ability to handle nonstationary dynamics—scenarios where system properties gradually change over time due to wear, temperature fluctuations, or miscalibration. Such scenarios are particularly challenging because they cannot be addressed through fixed robustness strategies, demanding continuous adaptation. To evaluate this capability, we introduced continuously varying perturbations by applying filtered noise to actuator gains, simulating the gradual deterioration and drift common in physical hardware (Fig.~\ref{fig:detailed_perturbations_humanoid}A). The results are particularly promising for real-world applications: the dual architecture (RWM) maintains a performance of 360.56, significantly outperforming both TD-MPC2 (311.67) and the fixed baseline policy which degrades to 233.42. This performance advantage is accompanied by systematically lower control error for RWM compared to both alternatives, indicating more efficient adaptation to continuously changing dynamics. The world model predictions provide a stable reference for ongoing adaptation even as dynamics evolve unpredictably, enabling rapid corrections without the need for policy retraining that would be impractical in deployed systems.

\begin{figure}[htbp] 
\centering
\includegraphics[width=0.9\textwidth]{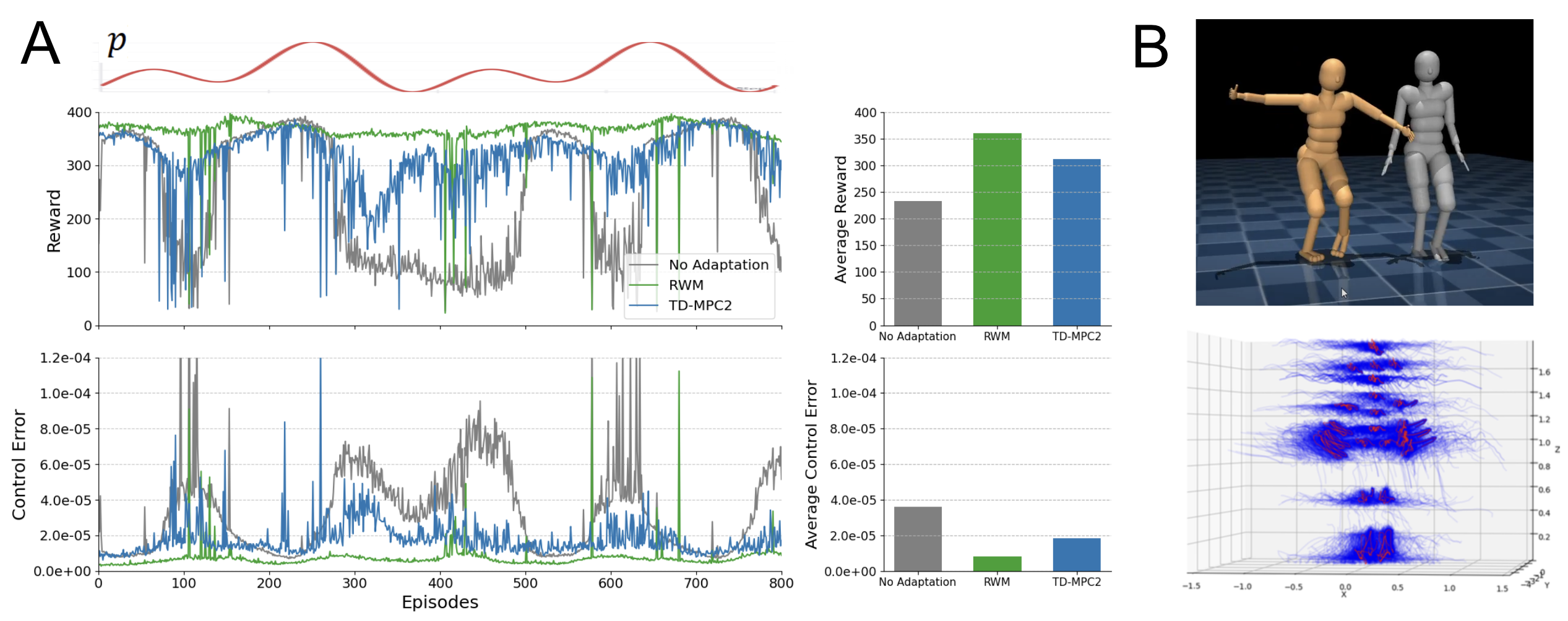}
\caption{Nonstationary perturbations and high-dimensional coordination. (A) Walker2D under continuous filtered noise perturbations to actuator gains, following sinusoidal pattern $p$ (top). Time series (left) and averaged performance (right) show RWM achieves the highest reward (360.56), followed by TD-MPC2 (311.67), with No Adaptation performing worst (233.42). Control error measurements (bottom) demonstrate that RWM maintains systematically lower error throughout adaptation compared to both alternatives. (B) Analysis of the 17-actuator Humanoid environment showing coordinated movement patterns maintained by RWM even under perturbations.\label{fig:detailed_perturbations_humanoid}}
\end{figure}

\subsection{High-Dimensional Coordination and a Control-Theoretic Perspective} \label{sec:natural_movement}

Beyond nonstationary dynamics, we examined high-dimensional coordination in the 17-actuator Humanoid environment (Fig.~\ref{fig:detailed_perturbations_humanoid}B), where successful locomotion requires synchronized movements across multiple joints. When perturbed, the unadapted system exhibits severely degraded coordination between joints (compare the normal orange gait with the gray affected movement). RWM's key advantage is revealed in the trajectory density visualization: by tracking latent predictions that capture coordinated multi-joint relationships, it maintains coherent movement patterns even under challenging conditions.

The functional separation between RL and control modules provides a framework to analyze different sources of variability: intentional variability from exploration and decision-making (RL module) versus unwanted variability from imperfect execution and external perturbations (control module). This distinction has potential applications in understanding biological motor control, where similar separations between voluntary movements and reflexive corrections exist. The system demonstrates sophisticated adaptation where perturbations to individual actuators trigger compensatory adjustments across multiple joints simultaneously, preserving overall balance and performance.

\subsection{Comparison with Domain-Randomized Baselines} \label{sec:domain_rand_comparison}

To isolate the benefits of RWM's online adaptive mechanism, we compared its performance against baseline agents (TD-MPC2 and No Adaptation) that were pre-trained with exposure to the same randomized actuator perturbations subsequently used during evaluation. This form of domain randomization during pre-training aims to enhance the inherent robustness of the baseline policies. The comparison then assesses whether RWM's dedicated online adaptation still provides significant advantages beyond this enhanced baseline robustness. 
The results of this comparison are presented in Figure~\ref{fig:ablation_pretrain_pert} (see Appendix~\ref{app:domain_rand_comparison_figures}).

This comparison investigates whether pre-training the baseline TD-MPC2 policy with exposure to randomized actuator perturbations—a form of domain randomization—could achieve robustness comparable to RWM's online adaptation. While such pre-training does enhance the baseline's performance when dynamics are nominal (i.e., perturbation OFF in Figure~\ref{fig:ablation_pretrain_pert} (see Appendix~\ref{app:domain_rand_comparison_figures}), where this specialized TD-MPC2 agent achieves higher reward than RWM), the critical test is performance under active, unmodelled dynamic shifts. During these perturbation phases, RWM's advantage is evident: Table~\ref{tab:domain_rand_comparison_metrics_on} shows that RWM maintains significantly higher reward and lower control error compared to the TD-MPC2 agent pre-trained with perturbations. This outcome underscores that while domain randomization can improve baseline robustness to some extent (as the pre-trained No Adaptation policy still performed worst, with a median reward of 0.1150 and control error of 0.9676 during perturbation), it does not eliminate the performance degradation caused by the perturbations encountered during evaluation. Consequently, this form of domain randomization does not substitute the need for an explicit online adaptation mechanism like RWM, which provides more effective real-time compensation for unmodelled dynamics than pre-training alone.

\begin{table}[h]
\centering
\caption{Comparison with Domain-Randomized Baselines: Performance Metrics during Perturbation ON (Baselines Pre-trained with Perturbations)}
\label{tab:domain_rand_comparison_metrics_on} 
\begin{tabular}{lcc}
\toprule
Metric                               & RWM    & TD-MPC2 \\
\midrule
Median Normalized Reward             & 0.6003 & 0.3600  \\
Median Normalized Control Error      & 0.7307 & 1.4162  \\
\bottomrule
\end{tabular}
\end{table}

\section{Discussion} \label{sec:discussion}

This work introduces Reflexive World Models (RWM), demonstrating how world model predictions can serve as implicit reference trajectories for rapid adaptation. RWM distinctively inverts the standard model-based RL paradigm: rather than using world models for planning, we enable the control policy $\pi_c$ to directly minimize prediction errors in latent space. This inverts the standard gradient flow from adjusting model parameters to match observations, to adjusting actions to match predictions. This contrasts with typical model-based RL methods where world models primarily serve planning or data augmentation. Unlike Model Predictive Control, which plans action sequences over a horizon, RWM uses single-step predictions as instantaneous references, enabling rapid responses with low computational overhead ideal for real-world deployment.

The inherent dual-timescale operation in RWM—slower policy learning with $\pi_0$ discovering optimal behaviors and faster error correction via $\pi_c$ maintaining execution fidelity—is a key aspect of its effectiveness. This separation of concerns, where $\pi_c$ rapidly compensates for dynamic shifts based on $\pi_0$'s intended trajectory (as predicted by the world model), allows the system to adapt without costly retraining of the entire policy or world model. While these theoretical guarantees (Section 4) rely on assumptions such as sufficient control authority and model accuracy, which may not always be fully met in highly complex or underactuated scenarios, our empirical results demonstrate RWM's practical effectiveness across challenging benchmarks.

\subsection{Limitations and Future Work}
Future work includes several key directions. First, learning latent representations specifically optimized for RWM's adaptive control could enable application to arbitrary pre-trained policies. Second, while our phased training approach simplifies analysis, end-to-end joint learning of the base policy, world model, and adaptive controller presents interesting challenges around preventing controller exploitation of model inaccuracies. Third, extending RWM beyond actuator perturbations to handle morphological changes, environmental shifts, or sensor noise would broaden applicability. This may require incorporating uncertainty estimation into the world model or developing methods to disambiguate different error sources.

The principle of using world model predictions as direct control targets offers a promising foundation for robust, learned behaviors. Reflexive World Models (RWM) build on this by integrating a solid theoretical framework with strong empirical support, facilitating the development of adaptive agents that can sustain high performance under real-world dynamic changes.

\section*{Acknowledgments}
This work was supported by the Champalimaud Foundation and the Google Cloud Research Credits program with the award GCP398030901.

\bibliography{references}
\bibliographystyle{plainnat}

\appendix

\section{Implementation Details} \label{app:implementation_details}

Our main simulations are conducted using the DeepMind Control Suite \citep{tassa2018deepmind}, with tasks such as \verb|humanoid-walk| and \verb|walker-walk| (with an action repeat of 2), and are implemented in PyTorch \citep{paszke2019pytorch}. Experiments were run on a computer with a single NVIDIA Tesla T4 GPU (16 GB GDDR6). For experimental runs of 1 million environment steps, typical execution times on this GPU were approximately 1.5-2 hours for the 'No Adaptation' and RWM setups. These setups primarily involve inference using pre-trained components, with RWM additionally performing a lightweight online update for its controller. In contrast, a full TD-MPC2 training run of 1 million steps, which includes its comprehensive learning cycle of model updates, planning, and policy/value optimization, typically took around 16-17 hours.

\textbf{Baseline TD-MPC2 Agent.} We pre-train a TD-MPC2 agent \citep{hansen2023td} for 1 million environment steps to provide the state encoder $e$, the base policy $\pi_0$, and the forward dynamics model $F$. The encoder $e$ consists of 2 hidden layers with 256 units each, while other MLPs within the TD-MPC2 agent (such as those for the policy and dynamics model) have 2 hidden layers with 512 units per layer. These networks use Mish activations. The encoder output is normalized using SimNorm with 8 dimensions \citep{hansen2023td}. The base policy $\pi_0$ is pre-trained with a thresholded quadratic action cost (Section~\ref{sec:action_cost}, with full details and illustration in Appendix~\ref{app:action_cost_details}). The environments use hard action bounds of $[-2, 2]$, and the threshold $c$ for this cost is set to 0.5, with the penalty coefficient $\lambda = 0.2$. For comprehensive details on the TD-MPC2 architecture and its training, we refer to the original publication \citep{hansen2023td}.

\textbf{RWM Controller.} The Reflexive World Model controller, $\pi_c$, is an MLP with 2 hidden layers and 512 units per layer, using ReLU activations. During RWM adaptation, the parameters of the pre-trained TD-MPC2 components (encoder $e$, base policy $\pi_0$, and forward model $F$) are frozen. The RWM controller $\pi_c$ is then trained online over a multi-step horizon of 3 to minimize the squared error between the forward model's latent state predictions (based on $\pi_0$'s actions) and the observed next latent states, as per Algorithm~\ref{alg:ctrl}. This update involves backpropagating the gradient of this prediction error with respect to the base policy action $a_0$, and then transferring this gradient (with an inverted sign, as detailed in Algorithm~\ref{alg:ctrl} and Equation~\ref{eq:control_update}) to update the RWM controller $\pi_c$. The learning rate for $\pi_c$ is $3 \times 10^{-4}$.

\textbf{Perturbation Protocol.} To evaluate adaptation, we introduce multiplicative step perturbations to actuator gains, where the effective action for each actuator $j$ becomes $a_{eff,j} = (a_{0,j} + a_{c,j}) \cdot (1 + p_j)$. For each perturbation instance, the value $p_j$ for each actuator $j$ is sampled uniformly from the range $[-0.5, 0.5]$. These perturbations alter actuator outputs for 10,000 to 15,000 environment steps, followed by an equal duration without perturbation. This cycle repeats throughout an experimental run, which can extend up to 1 million steps for comprehensive experiments including pre-training and adaptation phases.

All neural networks, including $\pi_c$, are optimized using Adam \citep{kingma2014adam}. A simplified reference implementation of the RWM controller is provided in the supplementary material. The full codebase will be made available upon publication.

\section{Details on the Thresholded Action Cost} \label{app:action_cost_details}
A base policy $\pi_0$ that provides differentiable, non-saturated actions is crucial for the effectiveness of our adaptive controller, $\pi_c$. Standard continuous control policies, however, can suffer from two common issues: (i) action saturation at the boundaries of the allowed range, which nullifies gradients needed by $\pi_c$ (Fig.~\ref{fig:action_cost_appendix}-C), and (ii) policy inaction or the "dead problem" (Fig.~\ref{fig:action_cost_appendix}-A,B), where naive quadratic action costs can overly penalize any movement, leading to minimal activity.

To address these, we employ a \emph{thresholded quadratic action cost}, $\lambda \sum_i (\max(0, |a_i| - c))^2$, during the pre-training of $\pi_0$. Here, $a_i$ is an action component, $c$ is a threshold (e.g., $c=0.5$), and $\lambda$ a coefficient (e.g., $\lambda=0.2$). This cost is a simple modification of a standard quadratic penalty but only penalizes action magnitudes $|a_i|$ exceeding $c$. This encourages $\pi_0$ to operate primarily within a smoother, non-saturated region (Fig.~\ref{fig:action_cost_appendix}-D), preserving differentiability for $\pi_c$. Simultaneously, by not penalizing actions within $[-c, c]$, it mitigates the dead problem, allowing the agent to achieve adequate task performance. While other techniques might yield suitably bounded actions, this approach offers a straightforward way to obtain a responsive baseline for adaptation.

\begin{figure}[h!]
\centering
\includegraphics[width=\textwidth]{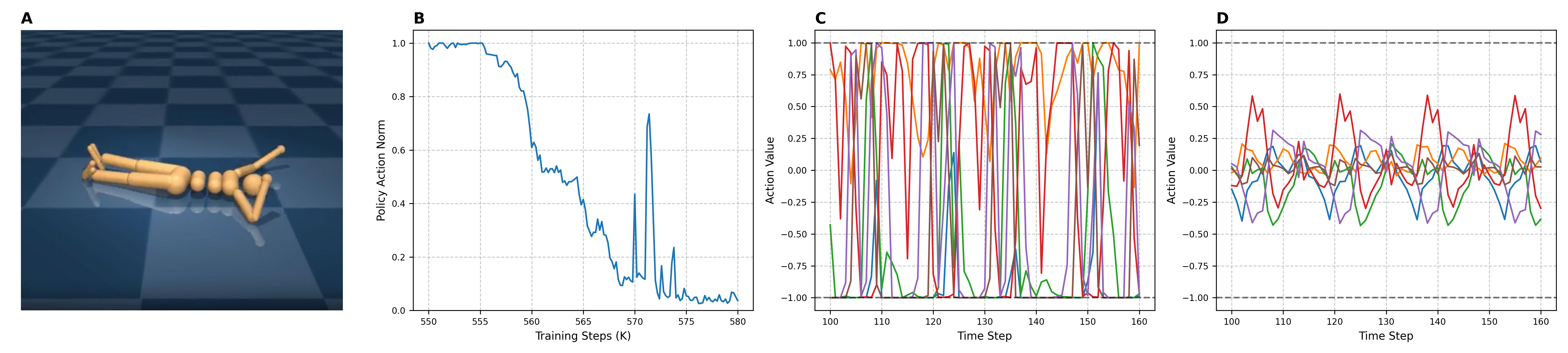}
\caption{Addressing challenges in baseline policy actions for effective adaptation.
(A) A humanoid agent exhibiting "dead" behavior due to a simple quadratic action cost in its RL objective, leading to inaction.
(B) The norm of actions for the simulation in (A), demonstrating a decay towards zero over training episodes as the agent minimizes the naive action cost.
(C) Action component values over time for a standard TD-MPC2 policy (without the thresholded cost) in the Humanoid task, showing frequent saturation at the boundaries [-1, 1], which impedes gradient flow for the adaptive controller.
(D) Smoother and bounded action values from a TD-MPC2 policy trained with the proposed thresholded quadratic action cost, maintaining differentiability and responsiveness.
}
\label{fig:action_cost_appendix}
\end{figure}

\section{Figure for Comparison with Domain-Randomized Baselines}
\label{app:domain_rand_comparison_figures}

Figure~\ref{fig:ablation_pretrain_pert} illustrates the performance comparison for the comparison with domain-randomized baselines discussed in Section~\ref{sec:domain_rand_comparison}, where baseline policies were pre-trained with exposure to actuator perturbations.

\begin{figure}[h]
\centering
\includegraphics[width=\textwidth]{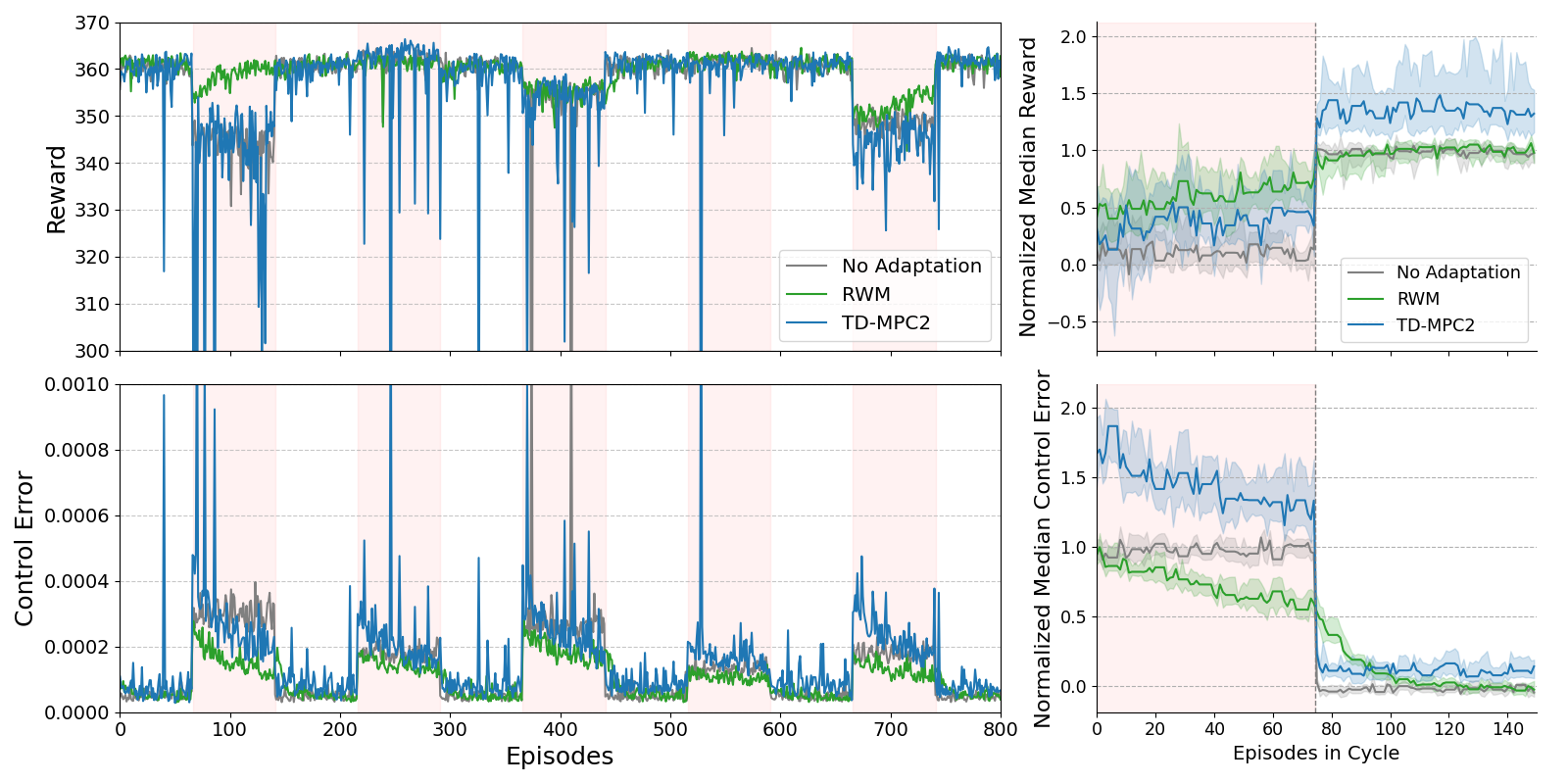}
\caption{Comparison with Domain-Randomized Baselines: Impact of pre-training with perturbations. Comparison of No Adaptation, RWM, and TD-MPC2 when baseline policies for No Adaptation and TD-MPC2 are pre-trained with exposure to actuator perturbations. (Left column) Reward and Control Error over episodes. (Right column) Normalized median reward and control error within perturbation cycles. While pre-training with perturbations improves the baseline, RWM (orange line) still demonstrates superior adaptation capabilities in terms of reward and control error compared to the pre-trained TD-MPC2 (green line) and the pre-trained No Adaptation policy (blue line).}
\label{fig:ablation_pretrain_pert} 
\end{figure}

\newpage

\newpage

\section{Theoretical Analysis} \label{app:theoretical_analysis}

We provide proofs for the main theoretical results, showing how control error bounds lead to value function guarantees.

The proofs rely on Assumption~\ref{ass:system_properties} from Section~\ref{sec:theoretical_guarantees}.

\begin{theorem}[Control Error Bounds]
Under Assumption~\ref{ass:system_properties}, the control law $a_c = -\eta(\partial F/\partial a)^T e(t)$ with $\eta < 1/L^2$ achieves:
\begin{equation}
    \|e(t)\| \leq \gamma^t\|e(0)\| + \sqrt{\epsilon^2 + \frac{P^2}{\alpha^2}}
\end{equation}
where $\gamma = (1 - \eta\alpha^2 + \eta L^2) < 1$.
\end{theorem}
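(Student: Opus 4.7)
The plan is to cast the error dynamics $e(t) = z(t) - \hat z(t)$ as a perturbed linear contraction and then apply a standard input-to-state stability argument. First I would derive the one-step evolution of $e$: writing the true dynamics as $z(t+1) = f(z(t), a_0+a_c) + p(t)$ and the prediction as $\hat z(t+1) = F(z(t), a_0)$, I would Taylor-expand $F$ in the action around $a_0$ to get $F(z, a_0+a_c) = F(z, a_0) + G\,a_c + O(\|a_c\|^2)$ with $G = \partial F/\partial a$. Combining with Assumption~\ref{ass:system_properties}.3 (model accuracy $\epsilon$) and Assumption~\ref{ass:system_properties}.4 (perturbation bound $P$), the residual can be grouped into a compound disturbance $\delta(t)$, giving $e(t+1) = G\,a_c(t) + \delta(t)$.

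Next I would substitute the feedback law $a_c = -\eta G^T e(t)$ into this recursion to obtain, after appropriate rearrangement, $e(t+1) = (I - \eta G G^T)\,e(t) + \delta(t)$. A Lyapunov step on $V(t) = \|e(t)\|^2$ then uses Assumption~\ref{ass:system_properties}.1 ($\|G\| \le L$) to bound the cross term and Assumption~\ref{ass:system_properties}.2 ($\sigma_{\min}(G) \ge \alpha$) to extract a strict quadratic decrease, yielding a scalar recursion $\|e(t+1)\| \le \gamma\,\|e(t)\| + \|\delta(t)\|$ with $\gamma = 1 - \eta\alpha^2 + \eta L^2$. The step-size condition $\eta < 1/L^2$ keeps the discrete-time update compatible with the Lipschitz bound, while the control-authority assumption $\alpha > LP$ prevents the perturbation-induced cross coupling from destroying contractivity.

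Unrolling the recursion would yield $\|e(t)\| \le \gamma^t\|e(0)\| + (1-\gamma)^{-1}\sup_s\|\delta(s)\|$. The final step is to show that the steady-state term simplifies to $\sqrt{\epsilon^2 + P^2/\alpha^2}$ rather than the cruder additive bound $\epsilon + P/\alpha$. This combination reflects the distinct channels through which the two disturbance sources act: model mismatch contributes $\epsilon$ additively to the state residual, while the actuator perturbation is effectively divided by $\alpha$ because it must be routed through the control Jacobian $G$ to reach $e$. Bounding the two contributions orthogonally and taking the fixed point of the scalar recursion produces the stated Pythagorean form.

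The main obstacle I anticipate lies in two places. First, the higher-order terms in the Taylor expansion of $F$ in $a_c$ must be shown to be dominated by the leading contractive term; because $\|a_c\|$ itself depends on $\|e(t)\|$, this likely requires a bootstrap or invariant-set argument, using the very bound being proved to justify the linearization a posteriori. Second, producing the quadratic combination $\sqrt{\epsilon^2 + P^2/\alpha^2}$ instead of a simple sum requires a careful decomposition of $\delta(t)$ into its additive and actuator-routed components, with Assumption~\ref{ass:system_properties}.2's strict margin $\alpha > LP$ ensuring the matrix $GG^T$ is well-conditioned enough to invert the actuator channel cleanly.
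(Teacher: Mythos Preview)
Your strategy matches the paper's almost step for step: decompose $e(t+1)$ into a control-effect term and a residual disturbance, Taylor-expand $F$ in the action to linearize the control effect, substitute $a_c = -\eta G^T e(t)$ to get a contraction with rate $\gamma = 1 - \eta\alpha^2 + \eta L^2$, obtain the scalar recursion $\|e(t+1)\| \le \gamma\|e(t)\| + \epsilon + P/\alpha$, and unroll. The paper does exactly this, and like you it works ``to first order'' without a formal bootstrap to control the Taylor remainder; your flagged concern there is more rigorous than the paper itself.

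The one place you diverge is the final Pythagorean form. You propose an orthogonal-channel decomposition of the disturbance to produce $\sqrt{\epsilon^2 + P^2/\alpha^2}$ directly. The paper takes a much cruder route: after reaching the additive bound $\epsilon + P/\alpha$ from the recursion, it simply invokes the elementary inequality $(a+b)^2 \le 2(a^2+b^2)$ to rewrite the sum in square-root form. So your second anticipated obstacle is not really an obstacle in the paper's argument---no careful decomposition is needed, just a one-line algebraic inequality (at the cost of an untracked constant). Your version would be cleaner if it worked, but the paper's shortcut is what actually appears.
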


\begin{proof}
The error evolves as:
\begin{equation}
    e(t+1) = \underbrace{F(z_t,a_t) - F(z_t,a_0)}_{\text{control effect}} + \underbrace{F(z_t,a_0) - f(z_t,a_t)}_{\text{model error}} + p(t)
\end{equation}

For the control effect:
\begin{align}
    -F(z,a_0 + a_c) &= -\eta(\partial F/\partial a)(\partial F/\partial a)^T e(t) \quad \text{(first order)} \\
    \|(\partial F/\partial a)(\partial F/\partial a)^T\| &\geq \alpha^2 \quad \text{(by min singular value)} \\
    \|F(z,a_0 + a_c) - F(z,a_0)\| &\leq (1 - \eta\alpha^2 + \eta L^2)\|e(t)\| \quad \text{(Lipschitz)}
\end{align}

The model error satisfies:
\begin{equation}
    \|F(z,a_0) - f(z,a_t)\| \leq \epsilon + L\|a_c\| \leq \epsilon + \frac{LP}{\alpha}
\end{equation}

Therefore:
\begin{equation}
    \|e(t+1)\| \leq \gamma\|e(t)\| + \epsilon + \frac{P}{\alpha}
\end{equation}

By condition (2) of Assumption~\ref{ass:system_properties} and $\eta < 1/L^2$:
\begin{equation}
    \gamma = 1 - \eta\alpha^2 + \eta L^2 < 1 - \eta(LP)^2 + \eta L^2 < 1
\end{equation}

The bound follows from solving this recurrence, using the fact that for positive $a,b$:
\begin{equation}
    (a + b)^2 \leq 2(a^2 + b^2)
\end{equation}
\end{proof}

\begin{theorem}[Performance Guarantees]
If $\sqrt{\epsilon^2 + P^2/\alpha^2} < \delta$ where $\delta$ bounds the region of quadratic approximation for $V$, then:
\begin{equation}
    V(z^*) - V(z) \leq \frac{H_M}{2}\left(\epsilon^2 + \frac{P^2}{\alpha^2}\right)
\end{equation}
where $H_M$ bounds the eigenvalues of $-\nabla^2V$.
\end{theorem}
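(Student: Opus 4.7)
The plan is to apply the value-function Taylor expansion from Section~\ref{sec:model_design} at the reference latent state and then substitute the asymptotic error bound from the preceding Control Error theorem. First I would identify the reference state $z^*$ with the world model's forward prediction $\hat{z}_{t+1}$, so that the deviation $\Delta z = z - z^*$ is exactly the tracking error $e(t) = z_{t+1} - \hat{z}_{t+1}$ that appears in Algorithm~\ref{alg:ctrl} and that is bounded in Theorem~1. Using the expansion
\begin{equation}
V(z^* + \Delta z) = V(z^*) - \tfrac{1}{2}\Delta z^{\top} H\, \Delta z + O(\|\Delta z\|^3),
\end{equation}
together with the hypothesis that the eigenvalues of $-\nabla^2 V$ near optimal trajectories are bounded by $H_M$, I would rearrange to obtain
$V(z^*) - V(z) = \tfrac{1}{2}\Delta z^{\top} H \Delta z + O(\|\Delta z\|^3) \leq \tfrac{H_M}{2}\|\Delta z\|^2 + O(\|\Delta z\|^3)$.

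Next I would invoke the Control Error theorem in the asymptotic regime where the transient contribution $\gamma^{t}\|e(0)\|$ has decayed, yielding $\|\Delta z\|^2 = \|e\|^2 \leq \epsilon^2 + P^2/\alpha^2$. The smallness hypothesis $\sqrt{\epsilon^2 + P^2/\alpha^2} < \delta$ ensures that $\Delta z$ lies inside the neighbourhood of $z^*$ on which the quadratic approximation is valid, so the cubic remainder is dominated by the quadratic term and can be absorbed into the stated constant. Chaining the two inequalities then delivers the announced bound $V(z^*) - V(z) \leq \tfrac{H_M}{2}(\epsilon^2 + P^2/\alpha^2)$.

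The main obstacle is making rigorous the passage between the transient and the asymptotic regimes of the error bound. A clean version either restricts the conclusion to $\limsup_{t\to\infty}$ or carries along an explicit decaying term $\gamma^{2t}\|e(0)\|^2$ and treats the cross-term with the $(a+b)^2 \leq 2(a^2+b^2)$-type inequality already exploited in the proof of Theorem~1. A secondary subtlety is justifying the identification of the forward-model prediction $\hat{z}_{t+1}$ with a value-optimal state $z^*$; this rests on the interpretation stated in Section~\ref{sec:model_design} that $\pi_0$ maximises mean value, so its predicted trajectory is locally value-optimal and the Hessian of $-V$ is positive definite there, which is precisely what makes the leading-order sign in the expansion consistent with $H \succ 0$. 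Once these two points are settled, the remainder of the argument is the short chain of inequalities sketched above.
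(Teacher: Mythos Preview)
Your proposal is correct and follows essentially the same route as the paper: Taylor-expand $V$ around $z^*$, bound the quadratic form by $H_M\|\Delta z\|^2/2$, identify $\|\Delta z\|$ with the tracking error, and substitute the asymptotic bound $\sqrt{\epsilon^2+P^2/\alpha^2}$ from the Control Error theorem, using the hypothesis $\|\Delta z\|<\delta$ to absorb the cubic remainder. Your explicit discussion of the transient-versus-asymptotic issue and of the identification $z^*=\hat{z}_{t+1}$ is in fact more careful than the paper's own proof, which simply writes $\|\Delta z\|\leq\|e(t)\|\leq\sqrt{\epsilon^2+P^2/\alpha^2}$ and invokes $|R(\Delta z)|/\|\Delta z\|^2\to 0$ without further comment.
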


\begin{proof}
Around optimal trajectories, Taylor expansion gives:
\begin{equation}
    V(z^* + \Delta z) = V(z^*) - \frac{1}{2}\Delta z^T H \Delta z + R(\Delta z)
\end{equation}
where $|R(\Delta z)| \leq C\|\Delta z\|^3$ for some $C > 0$. 

The prediction error directly bounds state deviation:
\begin{equation}
    \|\Delta z\| = \|z - z^*\| \leq \|e(t)\| \leq \sqrt{\epsilon^2 + \frac{P^2}{\alpha^2}}
\end{equation}

When this is less than $\delta$, the quadratic term dominates since:
\begin{equation}
    \frac{|R(\Delta z)|}{\|\Delta z\|^2} \leq C\|\Delta z\| \to 0
\end{equation}

The bound follows from $\lambda_{max}(H) = H_M$ and the error bound.
\end{proof}

These results establish quantitative bounds linking world model accuracy ($\epsilon$), control authority ($\alpha$), and perturbation magnitude ($P$) to performance. The quadratic scaling reflects the natural structure of value functions around optimal trajectories.



\end{document}